\theoremstyle{plain}
\newtheorem{theorem}{Theorem}[section]
\newtheorem{lemma}[theorem]{Lemma}
\theoremstyle{definition}
\newtheorem{definition}[theorem]{Definition}
\theoremstyle{remark}
\newtheorem{remark}[theorem]{Remark}
\icmltitlerunning{Intrinsic Limits of Transformer Image Embeddings in Spatial Reasoning}
\begin{document}

\twocolumn[
  \icmltitle{On the Intrinsic Limits of Transformer Image Embeddings\\in Non-Solvable Spatial Reasoning}



  \begin{icmlauthorlist}
    \icmlauthor{Siyi Lyu}{nju}
    \icmlauthor{Quan Liu}{nju}
    \icmlauthor{Feng Yan}{nju}
  \end{icmlauthorlist}

  \icmlaffiliation{nju}{School of Electronic Science and Engineering, Nanjing University, Nanjing, China}
  
  \icmlcorrespondingauthor{Feng Yan}{fyan@nju.edu.cn}

  \icmlkeywords{Vision Transformers, Spatial Reasoning, Group Theory, Circuit Complexity}

  \vskip 0.3in
]



\printAffiliationsAndNotice{}  

\begin{abstract}
Vision Transformers (ViTs) excel in semantic recognition but exhibit systematic failures in spatial reasoning tasks such as mental rotation.
While often attributed to data scale, this work argues that the limitation arises from the intrinsic circuit complexity of the architecture.
By formalizing spatial understanding as a \textbf{Group Homomorphism Problem}---where latent embeddings preserve the algebraic structure of physical transformations acting on images---we identify a fundamental computational bottleneck.
Specifically, for non-solvable groups (e.g., $\mathrm{SO}(3)$), maintaining such structure-preserving embeddings is lower-bounded by the Word Problem, which is $\mathsf{NC^1}$-complete.
In contrast, constant-depth ViTs with polynomial precision are strictly bounded by the complexity class $\mathsf{TC^0}$.
Under the standard conjecture $\mathsf{TC^0} \subsetneq \mathsf{NC^1}$, a \textbf{complexity boundary} emerges: constant-depth architectures lack the logical depth required to capture non-solvable spatial structures in a single forward pass.
To empirically validate this theoretical gap, we propose the \textbf{Latent Space Algebra (LSA)} benchmark, which reveals a significant degradation in ViT representations as the compositional depth of non-solvable tasks increases.
\end{abstract}

\section{Introduction}

Vision Transformers (ViTs) have fundamentally reshaped computer vision \citep{vaswani2017attention, carion2020end,dosovitskiy2021image, khan2022transformers}.
While these architectures have achieved state-of-the-art performance in semantic tasks, persistent failures in spatial reasoning have been observed \citep{stogiannidis2025mind,khemlani2025vision,chen2025spatial}.
Recent benchmarks indicate that even massive foundation models struggle with geometric transformations---such as mental rotation and relative positioning---beyond simple edge cases \citep{keremis2025empirical,kong2025lrr}.
This raises a critical question: \textit{Is this failure a result of insufficient data or an intrinsic complexity barrier?}

\begin{figure*}[t]
  \vskip 0.2in
  \begin{center}
    \centerline{\includegraphics[width=0.85\textwidth]{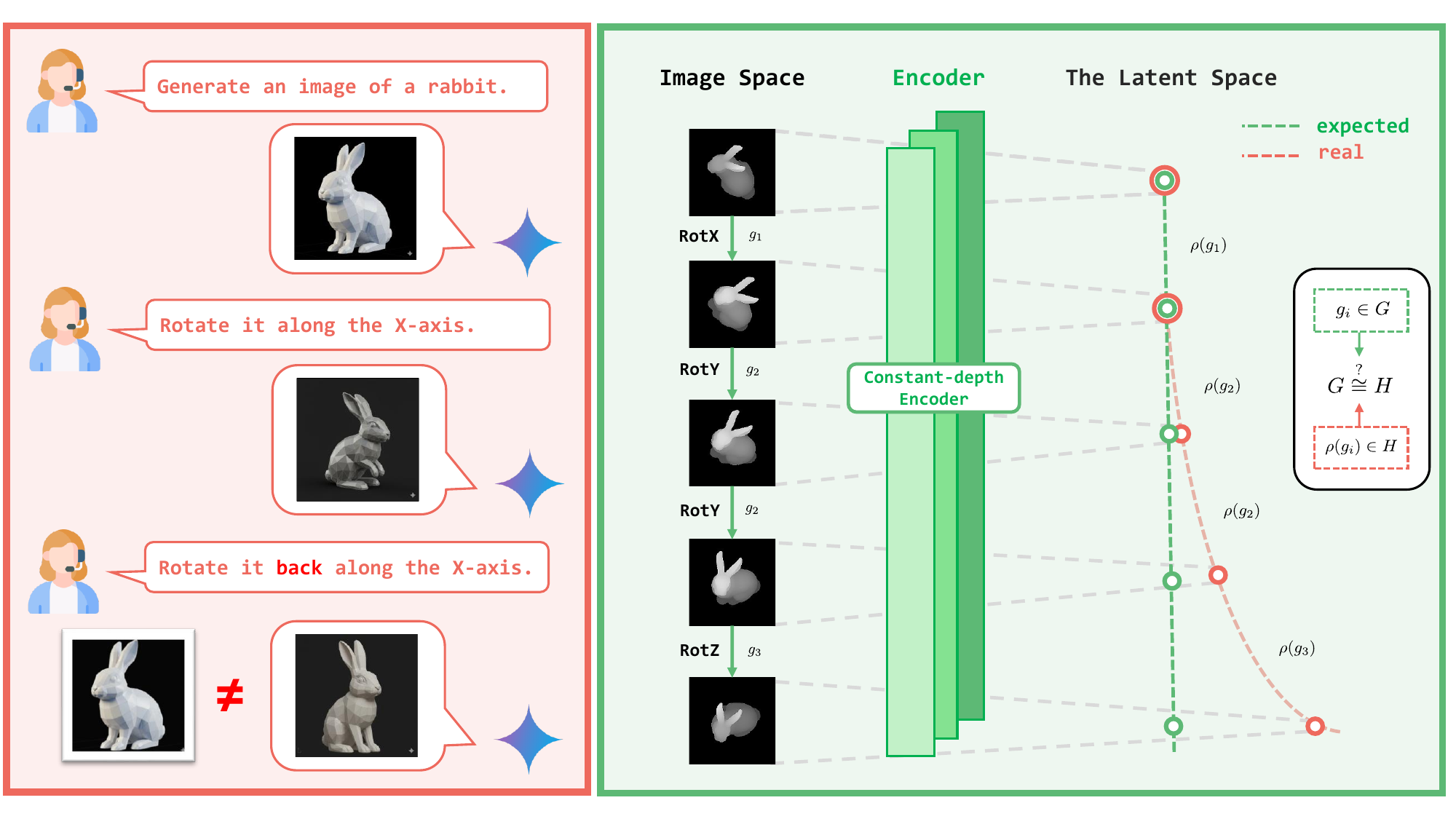}}
    \caption{\textbf{From Empirical Spatial Failures to the Homomorphism Alignment Problem.} \textbf{(Left)} The Spatial Reasoning Problem: Evidence of foundation models failing simple mental rotation tasks. \textbf{(Right)} The Homomorphism Alignment Problem: Can a constant-depth encoder map observations transformed by group $G$ to a latent space where induced dynamics $H$ are isomorphic to $G$ ($H \cong G$)? The analysis demonstrates that for non-solvable groups, this isomorphism is prohibited by architectural complexity constraints.}
    \label{fig:homomorphism_concept}
  \end{center}
  \vskip -0.2in
\end{figure*}

This work identifies the latter, proposing that standard ViT embeddings are theoretically constrained by their constant circuit depth, which limits their capacity to model the algebraic structure of complex spatial transformations.

To formalize this, spatial understanding is defined as the acquisition of a \textbf{Group Homomorphism}. A robust, object-agnostic embedding should ensure that the relationships between latent representations faithfully preserve the composition law of the underlying geometric group $G$ (e.g., $\mathrm{SO}(3)$). Under this framework, for a neural network to maintain such a structure-preserving space, it must implicitly solve the Word Problem for that group.

Leveraging Circuit Complexity Theory, a critical bottleneck is identified. By Barrington's Theorem, the Word Problem for finite non-solvable groups (such as the icosahedral subgroup of $\mathrm{SO}(3)$) is $\mathsf{NC^1}$-complete, requiring logarithmic logical depth to resolve serial dependencies.
In contrast, standard ViT encoders, operating with constant depth and polynomial precision, are strictly bounded by the complexity class $\mathsf{TC^0}$. Under the standard conjecture $\mathsf{TC^0} \subsetneq \mathsf{NC^1}$, a \textbf{complexity boundary} emerges: constant-depth architectures lack the requisite logical depth to capture non-solvable spatial structures.
This implies that ViTs rely on shallow approximations rather than mastering the underlying group isomorphism.

These theoretical limits are empirically validated via the \textbf{Latent Space Algebra (LSA)} benchmark. By employing a recursive linear probing protocol, it is demonstrated that while ViT representations maintain fidelity for abelian transformations, they undergo a structural degradation on non-solvable relations as the compositional depth increases.

\textbf{Contributions.} This study provides three primary contributions: 
\textbf{(1) Algebraic Formalization:} Spatial representation is cast as a group homomorphism problem, categorizing task hardness by the solvability of the underlying transformation group. 
\textbf{(2) Complexity Boundary Proof:} It is formally established that constant-depth ViTs are theoretically incapable of capturing non-solvable structures (e.g., 3D rotations) under standard complexity conjectures.
\textbf{(3) Algebraic Benchmark and Empirical Validation:} Departing from conventional task- or phenomenon-driven paradigms, we propose the LSA benchmark to evaluate spatial reasoning through the lens of algebraic structures.
Empirical evaluations on LSA successfully validate our theoretical insights, demonstrating a significant degradation in ViT representations when modeling non-solvable dynamics.

\section{Related Work}

\subsection{Spatial Reasoning Failures in Vision Transformers}

Despite their semantic success, ViTs exhibit systemic failures in spatial reasoning.
Empirical studies show that large-scale Vision-Language Models (VLMs) struggle with basic prepositions, often failing to distinguish relative positioning better than chance \citep{subramanian2022reclip, lepori2024beyond, stogiannidis2025mind, kong2025lrr}.
While models may excel at absolute spatial queries, they falter on compositional benchmarks like Winoground, acting as ``pattern matchers'' rather than robust reasoners \citep{thrush2022winoground}.
This deficit extends to dynamic transformations, where performance drops significantly on non-canonical 3D rotations and abstract geometric tasks \citep{li2025tackling,tuggener2025efficient, keremis2025empirical,khemlani2025vision,chen2025knot}. Unlike prior work attributing these issues to optimization artifacts or attention misalignment \citep{qi2025beyond, chen2025spatial}, this study investigates whether the architecture possesses the computational capacity to implement spatial algorithms using Circuit Complexity.

\subsection{Computational Expressivity of Transformer Architecture}

Circuit Complexity provides a rigorous framework for mapping Transformer limits. Theoretical analyses establish that standard constant-depth Transformers with polynomial precision are restricted to the $\mathsf{DLOGTIME}$-uniform $\mathsf{TC^0}$ class \citep{hahn2020theoretical,liu2023transformers,merrill2023parallelism}. This bound precludes the execution of inherently serial computations or deep hierarchical reasoning within a single forward pass \citep{merrill2023logic, chiang2025transformers}. Further studies link these constraints to failures in solving reachability or circuit evaluation tasks, limitations that persist in alternative architectures like State-Space Models (SSMs) and Mamba-like architectures \citep{peng2024limitations, merrill2024illusion, chen2024computational}. However, while existing literature focuses on formal languages and logic, the application of circuit complexity to explain spatial reasoning via group-theoretic structures remains unexplored.

\section{Preliminaries}

Evaluating the spatial reasoning capabilities of modern visual encoders remains a challenge, as conventional evaluations frequently treat spatial understanding as phenomenological curve-fitting to specific tasks.
To establish a more rigorous metric, this work draws on the philosophy of Henri Poincar\'e and the Erlangen Program \citep{Klein1893ACR,Poincare1905}. From this perspective, ``space'' is not merely a collection of static coordinates, but is intrinsically defined by the invariants under a specific group of transformations. 
Consequently, we posit that mastering spatial relationships is mathematically equivalent to capturing these underlying group structures. Evaluating true spatial cognition, therefore, arguably requires verifying whether a model's latent representation can preserve such invariant algebraic properties.

The operational paradigm under investigation is the standard feed-forward execution. Consider images connected by sequential group transformations (e.g., image $\text{A}$ rotates to $\text{B}$, and then to $\text{C}$). 
Mainstream encoders map each image to its respective latent representation via a single forward pass.
A faithful spatial embedding requires these latents to satisfy the corresponding algebraic constraints dictated by the physical transformations. 
Our framework investigates the intrinsic limit of this paradigm: can a single forward pass truly embed images into a space governed by such algebraic constraints? 

To systematically answer this, this section establishes the mathematical foundations by introducing three interlocking concepts. 
First, we analyze the \textbf{Solvability Hierarchy} of groups (Section~\ref{section3.1}), which characterizes the degree of algebraic entanglement in physical transformations and reveals that non-commutative operations induce serial dependencies.
Second, we introduce the \textbf{Word Problem} (Section~\ref{section3.2}), a formal algebraic paradigm that encapsulates the challenge of resolving these cumulative serial dependencies.
Third, we utilize \textbf{Circuit Complexity} (Section~\ref{section3.3}) to measure the computational hardness of the Word Problem, bounded by Barrington's Theorem, making the algebraic difficulty directly comparable to the expressivity limits of neural architectures.

\subsection{Task Difficulty: The Solvability Hierarchy}
\label{section3.1}

To investigate whether latent representations can effectively preserve algebraic structures, it is crucial to recognize that different transformation groups exhibit distinct mathematical properties, with commutativity being of primary importance. From an algebraic perspective, this commutativity structure heavily influences the dependency between sequential operations.

Intuitively, if operations commute, their order is irrelevant, allowing for parallel aggregation; if they do not, the operations are algebraically entangled, yielding strict serial dependencies.
This structural property is formalized by the Derived Series of the group $G$. The series tracks the depth of non-commutativity using the commutator $[g, h] = g^{-1}h^{-1}gh$, which quantifies the difference between executing $g$ then $h$ versus $h$ then $g$.

Formally, for any subgroup $H \subseteq G$, the derived subgroup $[H, H]$ is defined as the subgroup generated by the set of all commutators of elements in $H$, i.e., $[H, H] = \langle \{ [g, h] \mid g, h \in H \} \rangle$.
The Derived Series is then defined recursively as $G^{(0)} = G$ and $G^{(k+1)} = [G^{(k)}, G^{(k)}]$.
Termination of the Derived Series at the trivial subgroup indicates that all higher-order non-commutative dependencies can be resolved into simpler components, giving rise to the following classification \citep{hall2015lie}:

\textbf{Level 1: Abelian -- e.g., 2D Translation.} Operations commute ($gh=hg$), so the commutator is trivial ($[g,h]=e$). The derived series terminates immediately ($G^{(1)} = \{e\}$). 

\textbf{Level 2: Solvable Non-Abelian -- e.g., 2D Rigid Motion.} Order matters ($gh \neq hg$), but the transformation can be decomposed into a finite hierarchy of abelian steps. Specifically, the group of rigid transformations ($\mathbb{R}^2 \rtimes \mathrm{SO}(2)$) has a derived series that terminates at the identity in finite steps ($G^{(k)} = \{e\}$), allowing for relatively shallow recursive evaluation.

\textbf{Level 3: Non-Solvable -- e.g., 3D Rotation.} Operations are deeply entangled and cannot be decoupled into abelian layers. The derived series never reaches the identity ($G^{(k)} \neq \{e\}$), instead stabilizing at a complex subgroup. Crucially for vision, the continuous group $\mathrm{SO}(3)$ contains discrete subgroups isomorphic to the non-solvable group $\mathrm{A}_5$. This algebraic ``deadlock" prevents decomposition, imposing a fundamental barrier to parallelization.

\subsection{Formalizing the Dependency: The Word Problem}
\label{section3.2}

The strict serial dependency generated by non-commutative spatial operations is mathematically formalized by the \textbf{Word Problem}. 

\begin{definition}[The Finite Group Word Problem]
    Let $G$ be a finite group (a group containing a finite number of elements) generated by a set $\Sigma$. The Word Problem over $G$ is the decision problem of determining, for an input sequence of generators $w = (g_1, \dots, g_n) \in \Sigma^n$, whether their product evaluates to the identity element:
    \begin{equation}
        g_n \dots g_2 g_1 \stackrel{?}{=} e
    \end{equation}
\end{definition}

While spatial reasoning typically involves evaluating a specific final physical state (e.g., $z_{\text{final}} = (g_N \dots g_2 g_1) \cdot z_{\text{init}}$), for finite groups, this evaluation is computationally equivalent to the decision-based Word Problem \citep{barrington1986bounded,beaudry1997finite}. Crucially, if an encoder's latent representation implicitly contains the information required to solve the Word Problem, it indicates that the representation has successfully encoded the underlying group structure. By mapping the spatial task to the Word Problem, the physical challenge is translated into a standardized algebraic paradigm.

\subsection{Measuring Capability: Circuit Complexity}
\label{section3.3}

While the Word Problem formalizes the serial dependency, \textbf{Circuit Complexity} provides the theoretical framework to measure the computational hardness of executing such operations and to bound the expressivity of neural architectures. It is essential to emphasize that in this context, circuit depth refers to an asymptotic scaling relative to the input sequence length $N$, rather than a fixed structural layer count. For instance, a deployed ResNet-152, regardless of how physically deep it is, merely executes a static computational graph and possesses a strictly constant circuit depth with respect to $N$.

Two complexity classes are central to matching parallel versus sequential capabilities:

\textbf{The Transformer Limit ($\mathsf{TC^0}$).} This class comprises problems solvable by constant-depth, polynomial-size threshold circuits. This class fundamentally characterizes the capability for highly parallel computations. Standard Transformer encoders with a fixed number of layers and polynomial precision are strictly contained within $\mathsf{TC^0}$ \citep{merrill2023parallelism, chiang2025transformers}.
In the framework of circuit complexity, precision refers to the number of bits required to represent intermediate numerical values. Practical neural network implementations utilizing fixed bit-width formats, such as standard 32-bit floating-point (\texttt{float32}), operate with strictly constant precision ($O(1)$ bits relative to the sequence length $N$), which trivially satisfies this polynomial bound.
This places a hard ceiling on their capacity to execute serial algorithms, modeling the capability of a single forward pass that executes a static computational graph.

\textbf{The Recursive Depth Requirement ($\mathsf{NC^1}$).} This class allows for a logical depth of $O(\log N)$, where $N$ is the input size. It characterizes computations that demand deep hierarchical or sequential dependencies that scale with the input.

\textbf{The Relationship and Conjecture ($\mathsf{TC^0} \subsetneq \mathsf{NC^1}$).} 
By definition, $\mathsf{TC^0} \subseteq \mathsf{NC^1}$, meaning any problem solvable by shallow parallel circuits can be simulated by deeper sequential circuits. However, it is a standard and widely accepted conjecture in complexity theory that this inclusion is strict ($\mathsf{TC^0} \subsetneq \mathsf{NC^1}$). If this conjecture were false (i.e., $\mathsf{TC^0} = \mathsf{NC^1}$), it would imply that deep sequential operations could be perfectly parallelized into shallow, constant-depth circuits without an exponential blow-up in circuit size. In the context of neural architectures, this would effectively collapse the boundary between bounded feed-forward networks and deep recurrent processes, implying that a single forward pass could simulate arbitrary loops of recursive reasoning.
Relying on this separation conjecture mathematically formalizes the physical intuition that strict serial dependencies cannot be fully bypassed by parallel layers.

\textbf{Barrington's Theorem} \citep{barrington1986bounded} directly places the evaluation of the Word Problem into the metric space of Circuit Complexity, thereby bringing the analysis of algebraic structure comprehension firmly into the realm of computational complexity theory. 
The theorem establishes that for any non-solvable finite group (such as $\mathrm{A}_5$), solving the Word Problem is $\mathsf{NC^1}$-complete. 
This definitively indicates that the computational complexity of the task intrinsically scales with the sequence length $N$ of the generators, formalizing the exact bottleneck required to evaluate the final transformed state.

\section{Theoretical Analysis}

This section formally analyzes the computational complexity required to learn a generalized spatial embedding.

\textbf{Theoretical Roadmap.} The proof proceeds in three logical steps: First, we formalize spatial understanding as a Group Homomorphism (Section~\ref{section4.1}). Second, we demonstrate that if an encoder successfully satisfies this homomorphism in a single forward pass, its output latent representations must implicitly contain the algebraic information necessary to solve the Word Problem  (Section~\ref{section4.2}). Finally, by bridging the theoretical lower bound of the Word Problem ($\mathsf{NC^1}$) with the architectural upper bound of static feed-forward networks ($\mathsf{TC^0}$), we establish a fundamental complexity barrier, preventing constant-depth architectures from fully capturing non-solvable spatial structures (Section~\ref{section4.3}).

\subsection{Formalizing Spatial Understanding}
\label{section4.1}

Spatial understanding is defined here as the capability to model the generative mechanism of geometric transformations. Let $\mathcal{X} \subseteq \mathbb{R}^{d_{\text{img}}}$ denote the observation space and $G$ be a transformation group acting on $\mathcal{X}$.
To ensure compositionality and generalization, the visual embedding is modeled as a \textbf{Group Homomorphism}.

\begin{definition}[Homomorphic Spatial Embedding]
\label{def:homomorphism}
    Let $g$ represent the composite transformation resulting from a sequence of $N$ generators $(g_1, \dots, g_N)$, i.e., $g = g_N  \dots g_1$. 
    An encoder $E: \mathcal{X} \to \mathbb{R}^d$ computes a Homomorphic Spatial Embedding if there exists a faithful representation $\rho: G \to \mathrm{GL}(d, \mathbb{R})$ such that:
    \begin{equation}
    E(g \cdot I) = \rho(g) E(I), \quad \forall g \in G, I \in \mathcal{X}
    \end{equation}
\end{definition}

Here, $\mathrm{GL}(d, \mathbb{R})$ denotes the General Linear Group, consisting of all $d \times d$ invertible matrices. The specific requirement for the target space to be $\mathrm{GL}(d, \mathbb{R})$ ensures that the learned latent operations are algebraically reversible and capable of modeling the full range of linear compositions, preserving the group structure of the original transformations.

Satisfying this homomorphism serves as a necessary condition for spatial cognition. Given that space itself is an invariant quantity independent of the environment, preserving its algebraic structure is essential for generalizable spatial reasoning.
This constraint forces the encoder to capture the generative group law, which is the structural requirement to handle the combinatorial explosion of composite states where memorization inevitably fails.

\subsection{Computational Complexity Bounds}
\label{section4.2}

The following analysis establishes the opposing complexity bounds for the architecture and the task.

\textbf{Architectural Upper Bound.}
The expressivity of the ViT architecture is characterized using Circuit Complexity.

\begin{lemma}[ViT Circuit Complexity under Polynomial Precision]
\label{lemma:ViT CC}
    A standard Vision Transformer encoder, operating with constant depth $L$ and polynomial precision, lies strictly within the complexity class $\mathsf{DLOGTIME}$-uniform $\mathsf{TC^0}$.
\end{lemma}

\begin{proof}
    The ViT architecture consists of two stages: Input Projection and Transformer blocks.
    The Input Projection maps pixel patches to vectors via linear operations which involve only the addition and multiplication of rational numbers. Such arithmetic operations can be strictly simulated by $\mathsf{TC^0}$ circuits without approximation error \citep{chiang2025transformers}.
    Subsequently, the Transformer Blocks consist of Self-Attention and MLPs. Recent theoretical analyses by \citet{merrill2023parallelism} and \citet{chiang2025transformers} have established that these blocks, when operating with precision of $O(\mathrm{poly}(n))$, can be simulated by $\mathsf{DLOGTIME}$-uniform $\mathsf{TC^0}$ circuits, where $n$ is the input size.
    As standard implementations utilize a constant number of bits (e.g., \texttt{float32} or \texttt{bf16}), which is a subset of the allowed polynomial precision, and since $\mathsf{TC^0}$ is closed under composition, the entire ViT pipeline remains strictly within $\mathsf{DLOGTIME}$-uniform $\mathsf{TC^0}$.
\end{proof}

\textbf{Task Lower Bound.}
The difficulty of the embedding task is determined by the algebraic structure of $G$.
Crucially, the following lemma evaluates the informational requirement of a single forward pass.

\begin{lemma}[Reduction to the Word Problem]
\label{lemma:reduction}
Let $G$ be a finite group acting on the image space.
If an encoder $E$ satisfies Definition~\ref{def:homomorphism} (Homomorphic Spatial Embedding), then the computational complexity of the task is lower-bounded by the Word Problem for $G$.
\end{lemma}

\begin{proof}
    Consider a sequence of generators $(g_1, \dots, g_N)$ acting on an initial reference image $I_0$.
    Here, $N$ denotes the compositional depth and should not be confused with the resolution of the image.
    Let $z_0 = E(I_0)$ be the known embedding of the initial state. The final image after the sequence of transformations is given by $I_N = (g_N \dots g_1) \cdot I_0$.
    If the encoder $E$ successfully completes the homomorphic embedding task, it produces a final embedding $z_N = E(I_N)$.
    According to Definition~\ref{def:homomorphism}, this output must satisfy:
    \[
    z_N = \rho(g_N \dots g_1) z_0 = (\rho(g_N) \dots \rho(g_1)) z_0
    \]
    Since $z_0$ is known and fixed, and the encoder only observes the final physical state $I_N$, correctly computing $z_N$ for an arbitrary sequence of length $N$ necessitates that the single forward pass intrinsically embeds the outcome of the iterated matrix product $\rho(g_N) \dots \rho(g_1)$. 
    Crucially, the definition requires $\rho$ to be a faithful representation, implying it is an injective homomorphism that preserves the group structure isomorphically (i.e., $\rho(g) = I \iff g = e$). 
    This isomorphism guarantees that the computational complexity of evaluating the iterated product in the matrix group $\rho(G)$ is equivalent to evaluating the product sequence in the abstract group $G$. 
    Furthermore, the evaluation problem over finite groups is computationally equivalent to the Word Problem \citep{barrington1986bounded,beaudry1997finite}.
    Consequently, the ability of the encoder to map any transformed image $I_N$ to its correct latent algebraic coordinate $z_N$ in a single forward pass implies that its computation inherently contains the complete information required to solve the Word Problem, thereby inheriting its computational hardness.
\end{proof}

\begin{remark}[From Discrete Subgroups to Continuous Groups]
Although Lemma~\ref{lemma:reduction} leverages finite non-solvable groups (e.g., $\mathrm{A}_5$), computer vision tasks typically involve continuous groups such as $\mathrm{SO}(3)$. The lower bound remains valid via subgroup restriction. If a network can compute a homomorphic embedding for the entire continuous group $G$, it must necessarily succeed for any subset of inputs drawn from a subgroup $H \subset G$. Since $\mathrm{SO}(3)$ contains discrete subgroups isomorphic to $\mathrm{A}_5$, and the Word Problem for $\mathrm{A}_5$ is $\mathsf{NC^1}$-complete, the general task for $\mathrm{SO}(3)$ inherits this $\mathsf{NC^1}$-hardness.
\end{remark}

\subsection{The Complexity Barrier}
\label{section4.3}

Combining the architectural limit ($\mathsf{TC^0}$) with the task lower bound ($\mathsf{NC^1}$), the main theoretical result follows.

\begin{theorem}[The Non-Solvable Barrier]
\label{thm: impossibility}
    Let $G$ be a transformation group containing a finite non-solvable subgroup where the Word Problem is $\mathsf{NC^1}$-complete. Under the standard complexity conjecture that $\mathsf{TC^0} \subsetneq \mathsf{NC^1}$, a constant-depth Vision Transformer with polynomial precision cannot implement a Homomorphic Spatial Embedding for $G$ in a single forward pass.
\end{theorem}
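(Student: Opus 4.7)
The plan is to argue by contradiction, chaining Lemma~\ref{lemma:reduction}, Proposition~\ref{propo:ViT CC}, and Barrington's Theorem. I would assume, toward contradiction, that there exists a constant-depth polynomial-precision ViT encoder $E$ satisfying Definition~\ref{def:homomorphism} for the given non-solvable $G$. Since $G$ is non-solvable by hypothesis, it contains a non-solvable finite subgroup $H$ (e.g., the icosahedral rotation group $\mathrm{A}_5 \leq \mathrm{SO}(3)$) with generating set $\Sigma$ on which the Word Problem $W_H$ is $\mathsf{NC^1}$-complete under $\mathsf{AC^0}$ reductions. The goal is then to exhibit a uniform $\mathsf{TC^0}$ circuit family deciding $W_H$, which would collapse $\mathsf{NC^1}$ into $\mathsf{TC^0}$.

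The reduction is the heart of the argument. Fix a base image $I_0 \in \mathcal{X}$ with embedding $z_0 = E(I_0)$. For a word $w = (g_1, \ldots, g_n) \in \Sigma^n$, treat the ViT as consuming a sequence of $n$ tokens or frames, each carrying the constant-size label of one generator $g_t$; since $\Sigma$ is finite, this input construction is manifestly in $\mathsf{AC^0}$. By iterating Definition~\ref{def:homomorphism} along the sequence, the encoder's final output must equal $\rho(g_1 \cdots g_n)\, z_0$. Because $\rho$ is faithful and $H$ is finite, the orbit $\{\rho(h) z_0 : h \in H\}$ is a finite point set with some fixed separation margin $\delta > 0$ (independent of $n$), so a linear-threshold comparator (trivially in $\mathsf{TC^0}$) decides whether $g_1 \cdots g_n = e$ from the output embedding. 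Applying Proposition~\ref{propo:ViT CC} to place $E$ itself in $\mathsf{DLOGTIME}$-uniform $\mathsf{TC^0}$ and using closure of $\mathsf{TC^0}$ under $\mathsf{AC^0}$ composition, the entire pipeline (encoding $\to$ $E$ $\to$ comparator) yields a $\mathsf{TC^0}$ circuit family for $W_H$. Combined with Barrington's Theorem this gives $\mathsf{NC^1} \subseteq \mathsf{TC^0}$, contradicting the hypothesized separation.

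The main obstacle I expect is ensuring that the reduction from the abstract word $w$ to a ViT-consumable input is genuinely $\mathsf{TC^0}$-cheap, without introducing a circular dependency. A naive attempt would feed $E$ the single composed image $I_w = (g_1 \cdots g_n) \cdot I_0$; producing $I_w$ explicitly, however, already requires iterated group multiplication, so the rendering step may itself sit outside $\mathsf{TC^0}$ and undermine the reduction. The token-sequence formulation sidesteps this because each input position depends locally on only one generator, and the homomorphism property then propagates the group product through $E$'s layers automatically. A secondary technical point is numerical robustness: the polynomial-precision hypothesis introduces rounding at every layer, so one must verify that the $n$-independent separation $\delta$ really does survive the accumulated error. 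This is where restricting to the finite subgroup $H$ (rather than reasoning directly in the continuous $\mathrm{SO}(3)$) is essential, since $\delta$ is then determined solely by $\rho|_H$ and the choice of $z_0$, and polynomial precision of $E$ suffices to distinguish identity from non-identity at any word length.
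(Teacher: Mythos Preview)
Your overall strategy coincides with the paper's: both argue by contradiction, chaining Lemma~\ref{lemma:reduction} (the Word-Problem reduction), Proposition~\ref{propo:ViT CC} (the $\mathsf{TC^0}$ upper bound on ViTs), and Barrington's $\mathsf{NC^1}$-completeness to derive $\mathsf{NC^1} \subseteq \mathsf{TC^0}$ from the assumed homomorphic encoder. The paper's proof is terser and simply invokes Lemma~\ref{lemma:reduction} as a black box, whereas you attempt to spell out the reduction explicitly and even flag the circularity issue (rendering $I_{\text{final}}$ would itself require the iterated product) that the paper glosses over.

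The one place your added rigor does not land cleanly is the token-sequence formulation. In Definition~\ref{def:homomorphism}, the encoder has signature $E : \mathcal{X} \to \mathbb{R}^d$ on \emph{single images}, and the equivariance law $E(g\cdot I) = \rho(g)\,E(I)$ relates two image inputs. There is no mechanism in that definition by which feeding $E$ a length-$n$ sequence of generator \emph{labels} produces $\rho(g_1\cdots g_n)\,z_0$; the phrase ``iterating Definition~\ref{def:homomorphism} along the sequence'' has no referent for such an input, because the homomorphism property is a constraint on how $E$ responds to transformed images, not a recurrence built into $E$'s forward pass. Your instinct that the naive single-image route is circular is correct, but the fix you propose changes the object under study rather than resolving the gap. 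The paper's proof avoids this by staying at the level of Lemma~\ref{lemma:reduction}'s statement (``computationally equivalent to the Word Problem'') without committing to a concrete $\mathsf{TC^0}$ input encoding; in that sense your version is more honest about where the difficulty lies, but the resolution would need either a sequence-level encoder in the hypothesis or an argument that the single-image pose-recovery task already embeds the Word Problem.
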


\begin{proof}
    The proof proceeds by contradiction. Assume there exists a ViT encoder $f \in \mathcal{F}_{\text{ViT}}$ that implements a Homomorphic Spatial Embedding for a non-solvable group $G$.
    By Lemma~\ref{lemma:reduction}, such an encoder effectively solves the Iterated Group Product problem for any sequence of generators in $G$.
    Since $G$ is non-solvable, it contains a finite subgroup for which the Word Problem is $\mathsf{NC^1}$-complete \citep{barrington1986bounded}. This implies that $f$ must be able to simulate any problem in $\mathsf{NC^1}$.
    However, by Lemma~\ref{lemma:ViT CC}, any function computed by a standard fixed-depth ViT lies strictly within the complexity class $\mathsf{TC^0}$.
    Consequently, the existence of such an embedding would imply $\mathsf{NC^1} \subseteq \mathsf{TC^0}$.
    This contradicts the standard separation conjecture $\mathsf{TC^0} \subsetneq \mathsf{NC^1}$.
    Therefore, the initial assumption must be false: a standard ViT, operating via a single forward pass, lacks the intrinsic logical depth to capture the algebraic structure of non-solvable groups.
\end{proof}

\subsection{Boundary Analysis}

Our result establishes a hard barrier for standard constant-depth architectures. Here, we analyze three common architectural extensions, demonstrating that such structural augmentations do not suffice to break the barrier.

\subsubsection{Chain-of-Thought}

While standard Vision Transformers operate as purely feed-forward, constant-depth encoders, a potential counter-argument suggests that Chain-of-Thought (CoT) unrolling could bypass the $\mathsf{TC^0}$ limitation by expanding the effective circuit depth to $O(N)$, theoretically enabling $\mathsf{NC^1}$ simulation \citep{merrill2024illusion}.
However, this extension effectively reverts the architecture to a deep recurrent model, resurrecting the historical challenges of RNNs.

Unlike discrete symbolic reasoning, spatial embedding operates on continuous manifolds where approximation errors compound exponentially with depth ($\epsilon_{\text{total}} \sim (1+\epsilon)^N$). Consequently, while CoT theoretically raises the complexity ceiling, it forces the model to confront the very issues—analog drift and optimization instability over long horizons—that non-recurrent architectures were designed to eliminate. Thus, the barrier remains practical: one must choose between the insufficient expressivity of constant depth or the optimization pathology of deep recurrence.

\subsubsection{Positional Encodings}

Positional Encodings (PE) inject sequence order but act strictly as pre-processing steps that do not increase the logical depth of the circuit.
\textbf{Absolute PE} ($x'_i = x_i + p_i$) involves the element-wise addition of data-independent vectors, which is an $\mathsf{AC^0}$ operation \citep{bergstrasser2024power}.
\textbf{Rotary PE (RoPE)} applies a rotation $R_{\theta, i}$. Crucially, for a fixed position $i$, $R_{\theta, i}$ is a constant matrix, and multiplying variable vectors by constant matrices is an $\mathsf{TC^0}$ map \citep{chen2025circuit}.
Since $\mathsf{TC^0}$ is closed under composition with constant-depth circuits, a ViT equipped with PE acts as a pre-computed lookup table and remains strictly within $\mathsf{TC^0}$.

\subsubsection{$\mathrm{SE}(3)$-Equivariant Networks and Inductive Bias}

Specialized architectures like $\mathrm{SE}(3)$-Transformers \citep{thomas2018tensor,tai2019equivariant,fuchs2020se,romero2021group, xu2023e2vit} explicitly bake in geometric structure via their kernel definition:
\begin{equation}
K_{J,J'}(\mathbf{x}) = \sum_{\ell} \underbrace{w_{\ell}(\|\mathbf{x}\|)}_{\substack{\text{learnable}\\\text{radial profile}}}
\cdot \Bigl( \underbrace{Y_\ell(\hat{\mathbf{x}}) \otimes C_{\text{CG}}}_{\substack{\text{fixed}\\\text{geometric structure}}} \Bigr)_{J,J'}
\end{equation}
This formulation reveals that the non-solvable algebra of $\mathrm{SO}(3)$ is encoded entirely in the fixed Spherical Harmonics ($Y_\ell$) and Clebsch-Gordan coefficients ($C_{\text{CG}}$).
However, while these constants provide the local multiplication rule (the group table), they do not confer the compositional depth required to solve the Word Problem for long sequences.
Solving the Word Problem for a sequence of length $N$ requires recursively applying these group operations, necessitating a logical depth of $\Omega(\log N)$.
Regarding the learnable dynamics (the radial weights $w$ or message passing), recent theoretical work by \citet{cao2025fundamental} proves that standard equivariant layers can be simulated by uniform threshold circuits.
Consequently, such models theoretically cannot solve the Word Problem for non-solvable groups via their learnable dynamics.

\begin{figure*}[t]
  \vskip 0.2in
  \begin{center}
    \centerline{\includegraphics[width=0.85\textwidth]{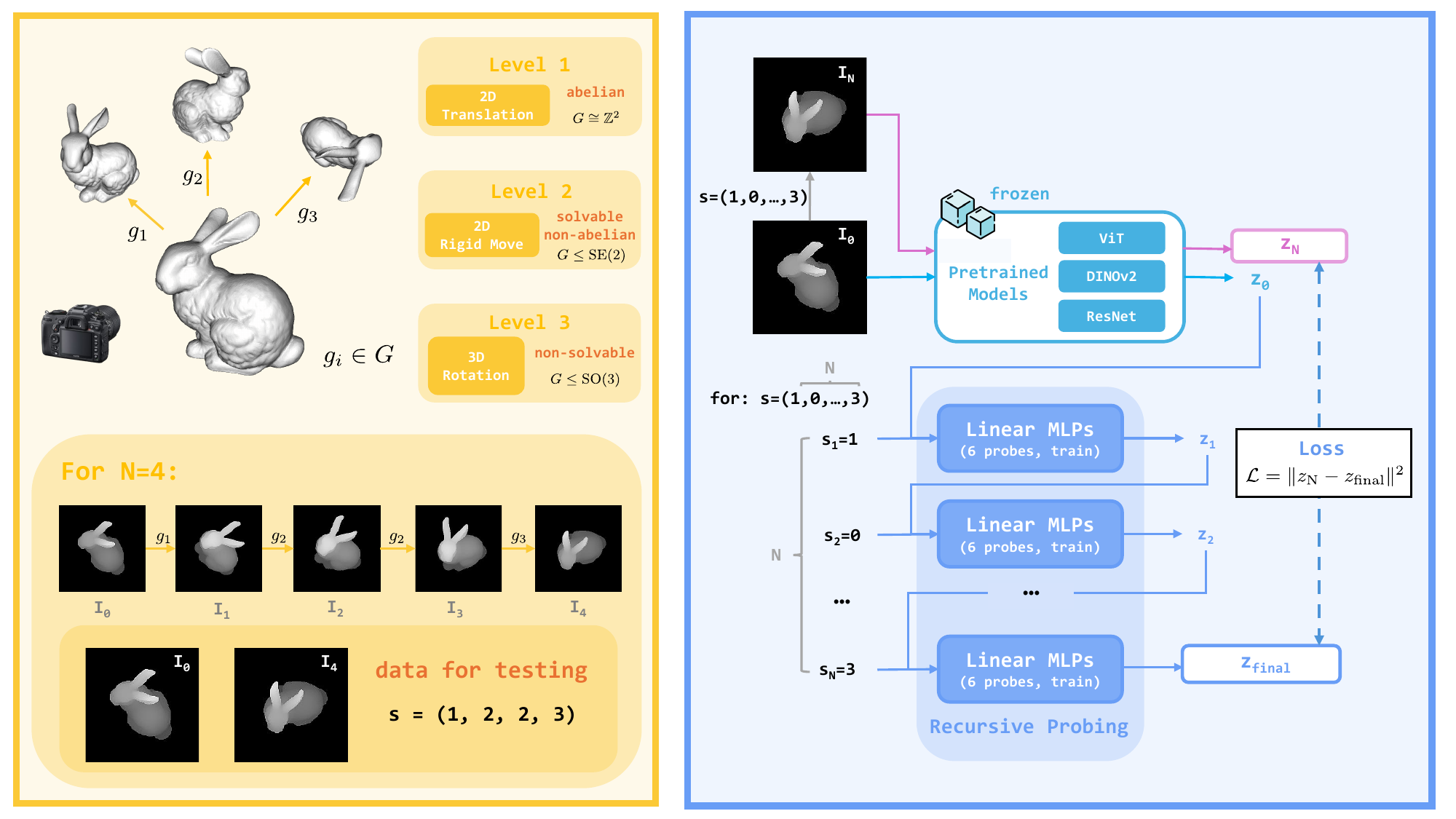}}
    \caption{\textbf{Overview of the Recursive Linear Probing Protocol.} \textbf{(Left)} The Data Generation Process: Image trajectories are sampled using operators from three complexity levels, yielding tuples $(I_0, I_N, S)$ (e.g., for $N=1$, $S=(1)$; for $N=4$, $S=(1,2,2,3)$). \textbf{(Right)} The Recursive Probing (test mode): During training ($N=1$), the probe minimizes the error between the predicted $\hat{z}_1$ (from $z_0, s$) and the ground truth $z_1$ given by frozen pretrained model. Crucially, during testing ($N>1$), probes are forced to recursively apply the learned transition $N$ times to $z_0$ using the sequence $S$ to generate $z_{\text{final}}$. The loss is then computed as the distance between this recursively generated embedding and the ground truth $z_N$ from the frozen backbone.}
    \label{fig:pipline}
  \end{center}
  \vskip -0.2in
\end{figure*}

\section{Experimental Verification}

To empirically validate the theoretical bounds established in Theorem~\ref{thm: impossibility}, the \textbf{Latent Space Algebra (LSA)} benchmark is designed. This framework probes whether standard backbones learn structure-preserving embeddings across the solvability hierarchy, strictly testing combinatorial generalization under recursive group operations.
To complement the learned approach, Orthogonal Procrustes Analysis (OPA) is employed as a deterministic control. By quantifying intrinsic linearity without training, OPA rules out optimization artifacts, verifying that recursive failures stem from fundamental structural deficits rather than probe training issues.

\subsection{The Latent Space Algebra (LSA) Benchmark}
\label{subsec:lsa_benchmark}

A hierarchy of three synthetic datasets is constructed, each governed by a group $G$ corresponding to the algebraic structures defined in the Preliminaries. This progression isolates the point where circuit depth becomes a bottleneck.

\textbf{Level 1: Abelian ($G \cong \mathbb{Z}^2$).} The baseline structure consists of 2D translations on a lattice. Since translations commute ($T_x T_y = T_y T_x$), the group is abelian and solvable. Standard constant-depth circuits are theoretically capable of modeling this structure.

\textbf{Level 2: Solvable Non-Abelian ($G \le \mathrm{SE}(2)$).} This level introduces 2D rigid motions, combining rotation and translation. While the operations do not commute, the group is structurally constrained and admits a derived series that terminates at the identity. This tests the capability to handle non-commutative operations that are strictly decomposable into finite abelian extensions.

\textbf{Level 3: Non-Solvable ($G \le \mathrm{SO}(3)$).} The theoretical barrier involves 3D rotations around the X, Y, and Z axes. These generators produce a dense subset of $\mathrm{SO}(3)$, which contains finite non-solvable subgroups. Modeling sequences in this domain requires logical depth beyond $\mathsf{TC^0}$.

Data generation employs a unified recursive pipeline based on random walks of length $N_{\text{max}}=20$. To strictly ensure that training and testing explore the identical state space, we partition these trajectories by depth: the model is trained solely on atomic transitions ($N=1$) and evaluated on recursive sequences ($N \in \{2, \dots,20\}$) extracted from the same generative paths. Detailed data formatting is visualized in Figure~\ref{fig:pipline}.

To ensure validity, two rigorous controls are enforced:

\textbf{Visual Injectivity.} 
While a standard 3D-to-2D projection does not naturally preserve underlying algebraic structures, our benchmark overcomes this by explicitly rendering asymmetric 3D objects (from the Stanford Scanning Repository \citep{curless1996volumetric}). This ensures a strictly injective mapping from the abstract group state to the 2D image space, guaranteeing that the rendered images inherently retain the mathematical relationships of the external 3D group. Consequently, the visual reasoning task rigorously inherits the computational hardness associated with evaluating the underlying combinatorial product (the formal proof of this structure preservation is detailed in Appendix~\ref{app:visual_injectivity}).

\textbf{Combinatorial Generalization.}
By construction, the recursive test sequences are formed by chaining the exact same atomic transitions used for training.
Consequently, the test set explores the same state space as the training set, with no novel viewpoints or objects introduced. Thus, any performance degradation at $N > 1$ is attributable to a deficit in combinatorial generalization, rather than Out-of-Distribution (OOD) robustness.

\subsection{Architectures}

Three representative pre-trained backbones are evaluated to distinguish intrinsic architectural limits from objective-induced biases. \textbf{ViT} (Base: 86M, 12 Layers; Large: 307M, 24 Layers; Huge: 632M, 32 Layers), represents the canonical supervised Transformer optimized for semantic discrimination. \textbf{DINOv2}  (Base: 86M, 12 Layers; Large: 300M, 24 Layers; Giant: 1.1B, 40 Layers)  serves as a geometry-aware baseline, testing whether self-supervision overcomes structural deficits. Finally, \textbf{ResNet} (R-50: 25M, 50 Layers; R-101: 45M, 101 Layers; R-152: 60M, 152 Layers) provides a convolutional reference to contrast global attention mechanisms with local inductive biases. All encoders operate with frozen weights to evaluate the expressivity of their native representations.

\subsection{Methodology}

To rigorously assess whether these backbones have internalized the algebraic structure, a \textbf{Recursive Linear Probing} strategy is implemented. Specifically, a set of independent linear transition modules $\{T_{\phi, i}\}$ is assigned, each acting as an approximation for a corresponding group generator (for instance, the 6 generators in our evaluation naturally dictate the deployment of 6 independent linear probes). These probes are trained on atomic transformations ($N=1$). This forces the probes to map the immediate local topology of the latent manifold. During testing, the probes coordinate to predict subsequent states along trajectories ($N \in \{2, \dots,20\}$) via a recursive readout mechanism, essentially mimicking algebraic iterated multiplication.

\textbf{Rationale for Linearity and Recursion.}
The probes $\{T_{\phi, i}\}$ are strictly constrained to be linear maps applied recursively. 
Linearity is dictated by Group Representation Theory, which posits that a valid homomorphic embedding implies the existence of a linear representation $\rho(g)$. 
By allocating an independent linear map to each generator, the evaluation precisely models the discrete composition of physical transformations. This constraint ensures that successful reasoning is exclusively driven by the frozen backbone's intrinsic geometric structure. 
Crucially, the recursive execution---applying the atomic transitions step-by-step during inference---forces the evaluation to resolve the sequential dependencies causally, thus preventing computational shortcuts. 
Consequently, success across long sequence evaluations validates that the frozen backbone preserves the exact algebraic trajectory, rather than relying on shallow interpolation.

\textbf{Mechanism Verification via OPA.}
To further validate the probe's findings and rule out optimization noise, OPA is employed.
As a deterministic method, OPA acts as a direct check on intrinsic structural linearity, ruling out failures caused by training dynamics. A low or negative score conclusively proves the absence of a valid linear operator for the transformation in the latent space, thereby attributing recursive collapse to architectural deficits rather than optimization issues.

\begin{figure*}[t]
  \vskip 0.2in
  \begin{center}
    \centerline{\includegraphics[width=0.8\textwidth]{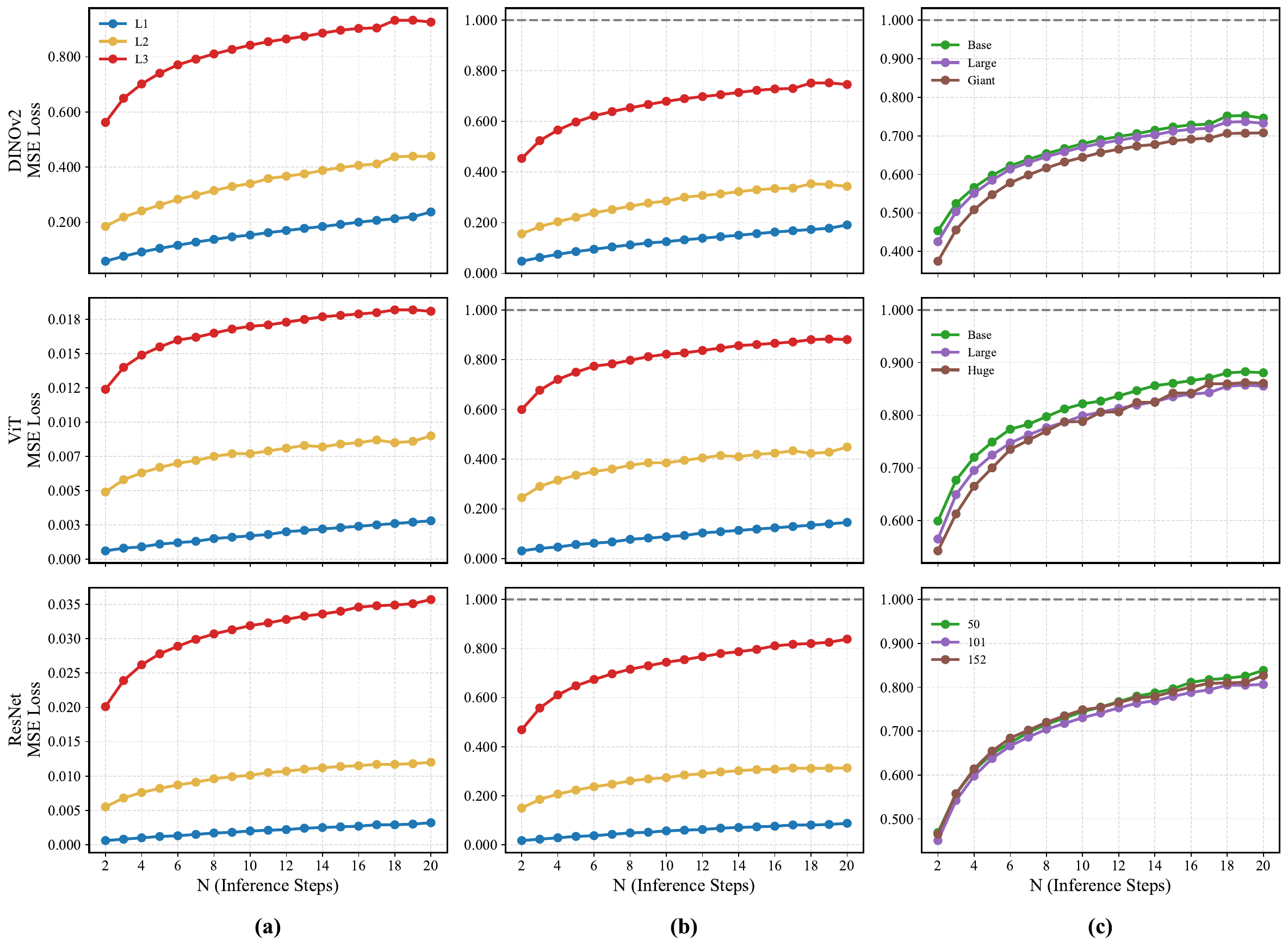}}
    \caption{\textbf{Universal Structural Degradation on Non-Solvable Groups.} \textbf{(a)} Absolute prediction error vs. Sequence Length $N$: Non-solvable transformations consistently incur 5.73--17.72$\times$ higher error than Abelian ones. \textbf{(b)} Normalized loss relative to the Baseline: Level 3 trajectory approaches the baseline (dashed line) significantly faster than Level 1 and 2. \textbf{(c)} Normalized loss for Level 3 tasks across varying depths: Increasing model depth $L$ provides negligible performance gains.}
    \label{fig:results}
  \end{center}
  \vskip -0.2in
\end{figure*}

\subsection{Results and Analysis}

Structural fidelity is evaluated by tracking the divergence between the recursively predicted latent state and the ground truth. 
To enable fair cross-architecture comparisons across latent representations with vastly different numerical scales, we normalize the prediction Mean Squared Error (MSE) against a \textbf{Global Mean Baseline}, contextualizing the absolute error into a universal Ratio Loss ($1 - R^2$). This baseline represents the MSE of a trivial predictor outputting the global mean of the target features (i.e., the target variance at step $N$). Approaching 1.0 indicates regression to variance noise, signifying a loss of specific spatial tracking capability.
(Detailed benchmark settings, hyperparameters, and full quantitative tables are provided in the Appendix.)

\textbf{Main Takeaway.} 
The experiments reveal a universal complexity barrier: regardless of architecture, depth, or training objective, all models exhibit a fundamental degradation in tracking long-range dependencies in non-solvable groups. 
While abelian transformations result in slow, linear error drift, non-solvable transformations trigger a rapid and significant degradation. 
Crucially, empirical results demonstrate that scaling network depth yields negligible benefits; the degradation trajectories are virtually identical within the same architecture family, confirming that constant-depth scaling cannot resolve deep sequential dependencies.

\textbf{1. The Hierarchy of Complexity.} 
Figure~\ref{fig:results}(a) illustrates the absolute prediction error as a function of sequence length $N$. 
A consistent hierarchy emerges across all backbone families: \textbf{Level 3 $\gg$ Level 2 $>$ Level 1}. 
As $N$ increases, the loss on Level 1 and level 2 tends to drift linearly, maintaining reasonable structural coherence. 
In contrast, Level 3 exhibits a steeper, monotonic increase. 
Quantitatively, the absolute error on Level 3 is approximately $5.73\times$ to $17.72\times$ higher than on Level 1 (e.g., $5.73\times$ for DINOv2-Base). 

\textbf{2. Rate of Structural Degradation.} 
Figure~\ref{fig:results}(b) further analyzes the speed of degradation by tracking the Normalized Loss against the dynamic Global Mean Baseline (Ratio Loss). 
The trajectories for Level 3 exhibit a significantly faster approach towards the trivial threshold compared to Level 2 and Level 1. 
On average, the Ratio Loss for Level 3 accumulates at approximately $2.43\times$ the rate of Level 2 across sequence lengths.
This accelerated degradation distinguishes the failure from the simple cumulative error drift seen in Abelian tasks; it signifies a severe structural breakdown of the homomorphic mapping at shallow logical depths, consistently preventing reliable long-horizon reasoning.

\textbf{3. Insufficiency of Depth Scaling.} 
The impact of architectural capacity is analyzed in Figure~\ref{fig:results}(c), which tracks the Level 3 normalized loss across varying depths within the same model families. 
Results show that depth scaling exerts an almost imperceptible influence on the error trajectory, with curves for varying depths essentially overlapping. Increasing physical layers yields neither consistent nor significant reductions in Ratio Loss. This directly aligns with the theoretical prediction: stacking layers within a static computational graph ($\mathsf{TC^0}$) cannot facilitate the complexity class transition to $\mathsf{NC^1}$ required for modeling non-solvable groups.

\textbf{4. Deterministic Verification via OPA.} 
Finally, to rule out optimization failure, OPA is employed to deterministically test for linearity at the atomic level ($N=1$). 
Results reveal a stark dichotomy: Abelian transformations (e.g., Translation X) yield strong linear fits (OPA $R^2 \approx 0.51$, Linear $R^2 \approx 0.79$), whereas non-solvable 3D rotations consistently fail (e.g., Rot Y: OPA $R^2 \approx -0.03$, Linear $R^2 \approx 0.32$). 
A negative $R^2$ implies that the optimal linear operator performs worse than a static mean predictor. 
This confirms that the failure is intrinsic: the latent space fundamentally lacks the geometry to represent non-solvable generators, making subsequent recursive collapse inevitable.

\section{Conclusion}

This work confirms the intrinsic limits of Transformer image embeddings in non-solvable spatial reasoning.
Theoretical analysis, bridging Group Theory and Circuit Complexity, establishes that constant-depth architectures lack the logical depth required to faithfully embed the algebraic structure of non-solvable groups.
These complexity bounds are empirically validated via the Latent Space Algebra (LSA) benchmark. Through recursive linear probing, a significant structural degradation is observed in latent embeddings under non-solvable transformations, substantiating that the deficit is an intrinsic architectural property rather than a failure of optimization.
Consequently, future research should reconcile the logical depth required for algebraic processing with the stability of neural training. Addressing this trade-off may require drawing inspiration from the biological principles of human navigation systems, solving the tension between computational expressivity and the recursive error accumulation inherent to deep sequential inference.

\section*{Acknowledgements}
We would like to thank Yalong Shi for the assistance with mathematical verification; Shujian Huang and Jie Guo for their constructive critiques regarding the argumentation structure and experimental design; Jingwei Xu for the valuable suggestions on experimental evaluation and improvements; and Yuanqi Li for the helpful feedback on writing and structural organization.
Furthermore, we extend our sincere gratitude to the four reviewers for their meticulous review, encouraging appreciation of our topic, and insightful recommendations that significantly improved the experimental rigor and clarity of presentation.
This work was supported by the Nanjing University Integrated Research Platform of the Ministry of Education, and the Fundamental and Interdisciplinary Disciplines Breakthrough Plan of the Ministry of Education of China.

\section*{Impact Statement}

This work presents a fundamental theoretical limitation of constant-depth Transformer architectures, demonstrating their inability to inherently model non-solvable spatial dynamics (e.g., 3D rotations) due to circuit complexity constraints.

\textbf{Safety and Reliability in Embodied AI.} As Vision Transformers are increasingly adopted as backbones for robotics and autonomous driving, our findings serve as a critical caution. We establish that standard ViTs, without recurrent depth or specific geometric inductive biases, effectively rely on shallow approximation rather than true algorithmic execution of spatial laws. In safety-critical applications---such as robotic manipulation or autonomous navigation---relying on such ``pattern matching'' representations could lead to unpredictable failures when the system encounters long-horizon compositional transformations not densely covered in training data. Our work suggests that, in these domains, incorporating hybrid modules or drawing inspiration from principles underlying biological navigation systems may be necessary to ensure logical robustness.

\textbf{Resource Efficiency and Environmental Impact.} By proving an intrinsic complexity barrier ($\mathsf{TC^0}$ vs. $\mathsf{NC^1}$), we highlight the futility of attempting to solve complex spatial reasoning solely via data scaling or parameter scaling within fixed-depth architectures. This theoretical insight has the potential to reduce the carbon footprint of AI research by steering the community away from resource-intensive brute-force training for tasks that are architecturally unsolvable, instead encouraging the design of more logically expressive and parameter-efficient models.


\bibliography{reference}
\bibliographystyle{icml2026}
\newpage
\appendix
\onecolumn

\section{LSA Benchmark Generation}
\label{app:LSA}

To ensure reproducibility and rigorous evaluation of the Latent Space Algebra (LSA) benchmark, this section provides detailed specifications of the data generation pipeline and formalizes the mathematical isomorphism that guarantees structure preservation. We employ high-fidelity 3D meshes from the Stanford 3D Scanning Repository, specifically utilizing seven distinct object classes: Bunny, Dragon, Armadillo, Lucy, Happy Buddha, Asian Dragon, and Thai Statue. All observations are rendered as $224 \times 224$ grayscale images with fixed lighting and a zero-vector black background to eliminate environmental cues. To strictly control for operational complexity across all difficulty levels, the cardinality of the generator set is fixed to $|\Sigma|=6$ for all experiments.

\subsection{Visual Injectivity and Mathematical Isomorphism}
\label{app:visual_injectivity}

A standard 3D-to-2D projection ($\mathbb{R}^3 \to \mathbb{R}^2$) does not naturally preserve such algebraic structures. To formalize how we overcome this, we clarify the definition of our rendering function $\mathcal{R}$. While a standard camera maps arbitrary points in space to a 2D plane, in our benchmark, we render a fixed 3D object. Consequently, the resulting 2D image is parameterized entirely by the applied 3D rotation. Therefore, we abstract this rendering process as a function $\mathcal{R}: \mathrm{SO}(3) \to \mathrm{Im}(\mathcal{R})$, where the domain is the rotation group $\mathrm{SO}(3)$, and $\mathrm{Im}(\mathcal{R})$ is the set of all valid 2D renderings produced under all possible 3D rotations.

For the underlying algebraic group structure to be preserved in the visual domain, $\mathcal{R}$ must be a bijection. Because $\mathrm{Im}(\mathcal{R})$ is explicitly generated by mapping the elements of $\mathrm{SO}(3)$ through $\mathcal{R}$, surjectivity is inherently satisfied. The critical mathematical requirement is injectivity. This is exactly where our Visual Injectivity design comes into play: by deliberately utilizing asymmetric Stanford models, and by explicitly verifying our generated dataset to confirm that no two distinct 3D rotations produce identical rendered images, we ensure $\mathcal{R}$ is strictly injective, making the entire mapping a strict bijection.

To see why this bijection preserves the full computational complexity of the group, we first define how a single visual transformation operates. Because $\mathcal{R}$ is a strict bijection, it allows us to formally define the group action $\star$ of $\mathrm{SO}(3)$ on the image set $\mathrm{Im}(\mathcal{R})$. For any image $I = \mathcal{R}(h)$ and any rotation $g \in \mathrm{SO}(3)$:
\begin{equation}
    g \star I := \mathcal{R}(g \circ \mathcal{R}^{-1}(I))
\end{equation}
This equation illustrates the physical ground truth: to appropriately apply a visual transformation $g$ to an image $I$, the system must implicitly decode the image back to its exact 3D pose via $\mathcal{R}^{-1}$, compose it with $g$, and re-render it.

Now, consider the sequential transitions in our LSA benchmark. Suppose the object starts at an initial pose, giving us the image $I_0 = \mathcal{R}(h_0)$. It is then asked to apply a sequence of discrete rotations $g_1, g_2, \dots, g_N \in \mathrm{SO}(3)$ step-by-step. Therefore, predicting the final image after $N$ transformations requires computing:
\begin{equation}
    I_N = g_N \star (\dots (g_2 \star (g_1 \star I_0)) \dots) = \mathcal{R}((g_N \circ \dots \circ g_2 \circ g_1) \circ \mathcal{R}^{-1}(I_0))
\end{equation}
By construction, this induced group action on the 2D image space is isomorphic to the physical dynamics of $\mathrm{SO}(3)$. Since the visual mapping $\mathcal{R}$ provides no algebraic shortcuts, a neural network operating on these 2D images must implicitly evaluate the underlying combinatorial product $(g_N \circ \dots \circ g_2 \circ g_1)$. Consequently, the visual reasoning task rigorously inherits the $\mathsf{NC^1}$-complete hardness associated with the transformation group.

\subsection{The Solvability Hierarchy of Datasets}
\label{app:solvability_hierarchy}

\textbf{Level 1: Abelian (2D Translation).} 
This level establishes a baseline commutative structure isomorphic to the discrete translation group $\mathbb{Z}^2$. The generator set $\Sigma_{\text{L1}}$ consists of six atomic translation operators with a fixed step size of $\delta = 20$ pixels. The action space is formulated using four cardinal translations (Right, Left, Up, Down) and two diagonal translations (Down-Right, Up-Left) to maintain the requisite cardinality of six. Mathematically, for an object located at a given coordinate $\mathbf{p} \in \mathbb{R}^2$, an action $g \in \Sigma_{\text{L1}}$ applies the affine shift $\mathbf{p}' = \mathbf{p} + \mathbf{v}_g$. Since vector addition is commutative, the sequences generated by these operators form an abelian group, which is theoretically solvable by constant-depth circuits. Boundary conditions are handled by allowing objects to partially clip the frame, which effectively tests the model's capacity for object permanence without compromising or altering the underlying linear group logic.

\textbf{Level 2: Solvable Non-Abelian (2D Rigid Motion).} 
To introduce non-commutativity while retaining algebraic solvability, we construct a dataset governed by the Special Euclidean group $\mathrm{SE}(2)$.
Restricting the group to 2D rigid motions resolves this physical instability while precisely preserving the necessary algebraic conditions. 
The generator set $\Sigma_{\text{L2}}$ comprises four translation operators identical to those in Level 1, supplemented by two in-plane rotation operators (e.g., clockwise and counter-clockwise rotations by a fixed angle around the image center). While rigid rotations and translations generally do not commute, the $\mathrm{SE}(2)$ group is structurally constrained such that its derived series terminates at the identity in finite steps. This progression evaluates the capability of architectures to process non-commutative operations that ultimately remain decomposable into finite abelian extensions.

\textbf{Level 3: Non-Solvable (3D Rotation).} 
This level directly targets the theoretical complexity boundary by embedding the non-solvable structure of the 3D rotation group $\mathrm{SO}(3)$. The generator set $\Sigma_{\text{L3}}$ operates entirely in the 3D world space prior to the 2D rendering projection. We define six generators corresponding to extrinsic rotations around the orthogonal $X, Y,$ and $Z$ axes by a fixed atomic angle of $\theta = \pm 30^\circ$. 
Applying these generators iteratively produces a dense subset of the rotation group. Since this subset contains finite non-solvable subgroups isomorphic to the alternating group $\mathrm{A}_5$, it enforces an algebraic structure where operations are deeply entangled and fundamentally cannot be decoupled into independent abelian layers. Modeling long-horizon sequences in this non-solvable domain therefore presents an $\mathsf{NC^1}$-hard computational challenge, surpassing the theoretical expressivity limit of static, constant-depth execution graphs.

\subsection{Trajectory Sampling Strategy}
\label{app:trajectory_sampling}

To ensure robust and unbiased coverage of the group manifold, data generation employs a recursive random walk sampling strategy. For each object class, the object is initialized at a canonical pose, and a trajectory is constructed by sequentially applying random generators from $\Sigma$ up to a maximum depth of $N_{\text{max}}=20$. Every intermediate step of the trajectory is explicitly recorded, yielding a dataset of tuples $(I_0, S_{1:k}, I_k)$ for all $k \in \{1, \dots, 20\}$ that is comprehensive of state transitions for all intermediate sequence lengths.

To rigorously isolate combinatorial generalization from mere memorization, the models are trained exclusively on atomic transitions ($N=1$) and are subsequently evaluated on the extended recursive sequences ($N \in \{2, \dots, 20\}$) extracted from these identical generative paths. This protocol guarantees that the test set explores the exact same state space as the training set without introducing novel viewpoints or out-of-distribution artifacts. Consequently, successful prediction during inference demands that the network architecture dynamically mimic algebraic iterated multiplication.

\begin{figure*}[t]
  \vskip 0.2in
  \begin{center}
    \centerline{\includegraphics[width=0.8\textwidth]{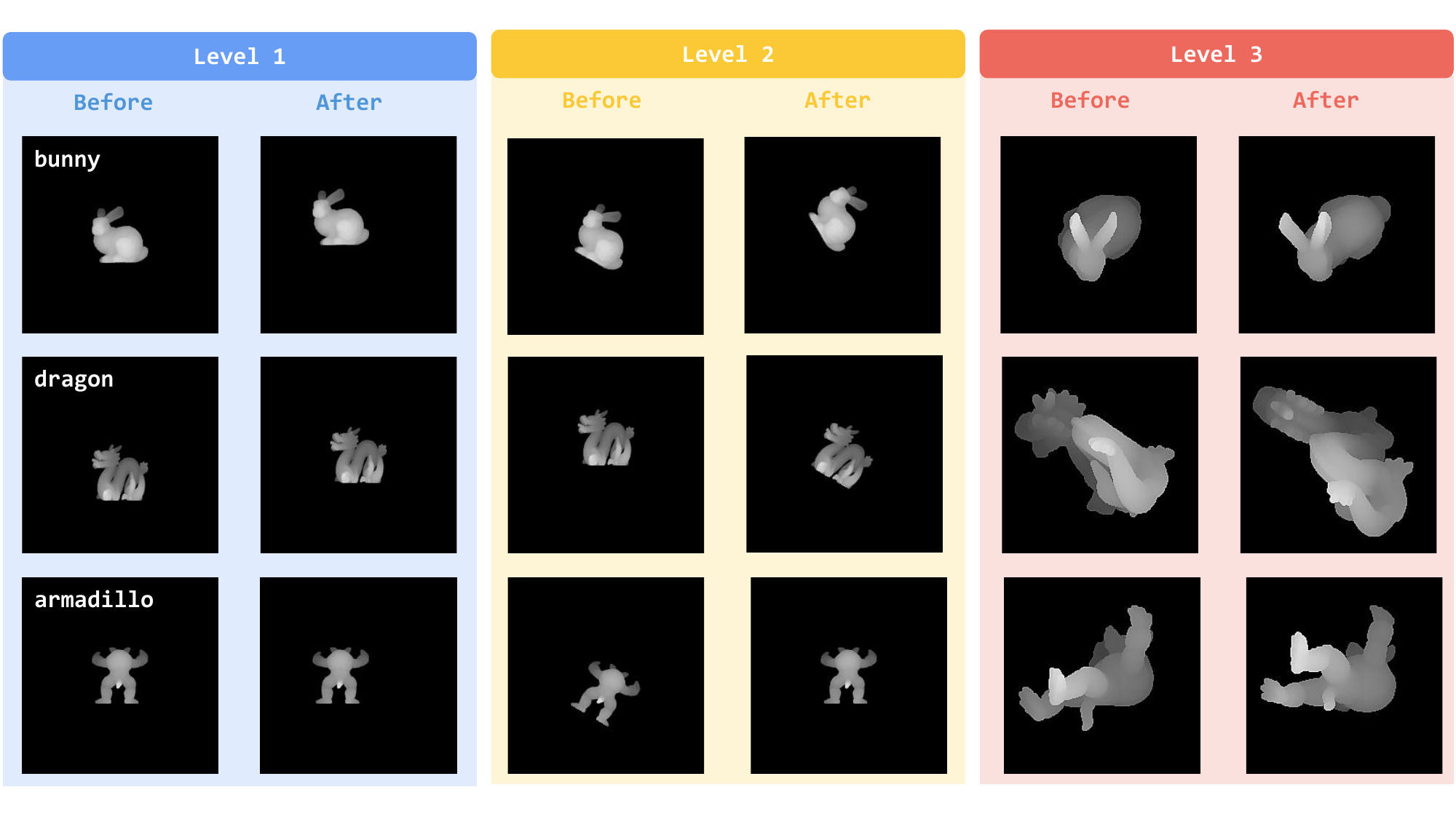}}
    \caption{\textbf{Visualizing the Algebraic Hierarchy.}
    We display sample atomic transitions ($N=1$) for the \textit{Bunny}, \textit{Dragon} and \textit{Armadillo} objects across the three complexity levels. \textbf{Left (Level 1):} Pure 2D translations preserve orientation and scale. \textbf{Center (Level 2):} 2D Rigid Motions ($\mathrm{SE}(2)$) introduce in-plane rotations alongside translations, altering 2D planar orientation while strictly preserving the object's scale. \textbf{Right (Level 3):} 3D Rotations introduce out-of-plane transformations, revealing occluded geometry and fundamentally altering the visual topology, corresponding to the non-solvable $\mathrm{SO}(3)$ group structure.}
    \label{fig:dataset}
  \end{center}
  \vskip -0.2in
\end{figure*}

\section{Implementation Details}
\label{app:implementation}

This section details the specific hyperparameters and training configurations employed for the Recursive Linear Probing experiments. All computations were executed using the PyTorch framework on NVIDIA GPUs.

\subsection{Backbone Specifications and Probe Architecture}
Three representative backbones are evaluated to ensure diverse architectural coverage and to distinguish intrinsic architectural limits from objective-induced biases.
\textbf{ViT} (Base: 86M, 12 Layers; Large: 307M, 24 Layers; Huge: 632M, 32 Layers) is selected to represent the canonical supervised Transformer optimized for semantic discrimination.
\textbf{DINOv2} (Base: 86M, 12 Layers; Large: 300M, 24 Layers; Giant: 1.1B, 40 Layers) serves as a geometry-aware baseline, testing whether self-supervision overcomes structural deficits.
Finally, \textbf{ResNet} (R-50: 25M, 50 Layers; R-101: 45M, 101 Layers; R-152: 60M, 152 Layers) provides a convolutional reference to contrast global attention mechanisms with local inductive biases. All encoders operate with frozen weights to evaluate the expressivity of their native representations.

To model the six algebraic generators independently, we employ a set of six separate Linear Probes, one dedicated to each generator.
Each probe $T_\phi^{(i)}$ ($i = 1, \ldots, 6$) is a single linear layer that maps the concatenation of the image embedding $z$ to a predicted next-state embedding, where the projection maps from $2 \times d_{\text{model}}$ to $d_{\text{model}}$ (i.e., $T_\phi^{(k)}: \mathbb{R}^{2d_{\text{model}}} \to \mathbb{R}^{d_{\text{model}}}$).
This per-generator design ensures that each probe specializes exclusively in one transformation, avoiding cross-generator interference and providing a cleaner measure of how well the representation encodes each individual algebraic transition.

\subsection{Training Configuration}

A separate set of six probes is trained for each backbone variant.
To ensure that performance differences reflect architectural properties rather than optimization noise, the random seed is fixed to $42$ across all runs.
The probes are optimized using the Adam optimizer with Mean Squared Error (MSE) loss, which minimizes the Euclidean distance between the predicted next-state embedding and the ground-truth target embedding, providing a direct measure of geometric divergence in the latent space.
The specific hyperparameters for both training and recursive testing phases are listed in Table~\ref{tab:hyperparams}.

\begin{table}[t]
  \caption{Hyperparameters for Recursive Linear Probe.}
  \label{tab:hyperparams}
  \begin{center}
    \begin{small}
    \scshape
      \begin{tabular}{lcc}
        \toprule
        Parameter & Training Phase & Testing Phase \\
        \midrule
        Learning Rate & $1 \times 10^{-4}$ & N/A \\
        Batch Size & 256 & 256 \\
        Epochs & 100 & N/A \\
        Num Workers & 4 & 2 \\
        Seed & 42 & 42 \\
        \bottomrule
      \end{tabular}
    \end{small}
  \end{center}
  \vskip -0.1in
\end{table}

\section{Quantitative Analysis}
To substantiate the claim regarding the non-solvable complexity barrier, we present the recursive prediction error normalized against the Global Mean Baseline.
Specifically, we report the Ratio Loss $1 - R^2 = \mathcal{L}_{\text{model}} / \mathcal{L}_{\text{baseline}}$, where $\mathcal{L}_{\text{baseline}}$ is the MSE of a trivial predictor outputting the global mean of target features at step $N$ (i.e., the target variance).
This normalization enables fair cross-architecture comparison by eliminating the effect of vastly different numerical scales across latent spaces.
A value approaching $1.0$ indicates that the model's predictions have regressed to the global mean, signifying a complete loss of structural tracking capability.
 
Table~\ref{tab:ratio_loss_base} reports the Ratio Loss for the Base variants of each architecture across all three complexity levels at steps $N=5, 10, 20$.
Two consistent patterns emerge.
First, Level 3 (Non-Solvable) tasks incur substantially higher Ratio Loss than Level 1 (Abelian) or Level 2 (Solvable) tasks across all architectures, confirming that the non-solvable complexity barrier is not an artifact of absolute scale but a genuine structural property.
Second, the Ratio Loss increases monotonically with $N$ for all models and complexity levels, indicating systematic error accumulation under recursive composition.

Table~\ref{tab:depth_ratio} further examines whether scaling model size alleviates the barrier, reporting Ratio Loss for Level 3 tasks across all architectural variants.
Increasing model size yields marginal and inconsistent improvements: for instance, ViT-Huge achieves a slightly lower $N=5$ ratio ($0.70$) compared to ViT-Base ($0.75$), yet the gap narrows and partially reverses at $N=20$ ($0.86$ vs.\ $0.88$).
DINOv2 exhibits the most consistent modest improvement with scale, while ResNet variants show negligible differences across depths.
These results indicate that the non-solvable complexity barrier is largely invariant to model scale, pointing to an intrinsic limitation of supervised and self-supervised discriminative objectives rather than a capacity bottleneck.

\begin{table}[ht]
  \caption{\textbf{Ratio Loss for Base Models Across Complexity Levels.}
  Values are normalized against the Global Mean Baseline at each step $N$, enabling fair cross-architecture comparison.
  A value of $1.0$ corresponds to complete regression to the mean.
  Level 3 (Non-Solvable) consistently exhibits substantially higher Ratio Loss, confirming the algebraic complexity barrier.}
  \label{tab:ratio_loss_base}
  \begin{center}
    \begin{small}
    \scshape
      \begin{tabular}{llccc}
        \toprule
        Model & Complexity Level & $N=5$ & $N=10$ & $N=20$ \\
        \midrule
        ViT-Base
          & Level 1 (Abelian)      & 0.0567 & 0.0877 & 0.1456 \\
          & Level 2 (Solvable)     & 0.3353 & 0.3850 & 0.4486 \\
          & Level 3 (Non-Solvable) & 0.7494 & 0.8218 & 0.8807 \\
        \midrule
        DINOv2-Base
          & Level 1 (Abelian)      & 0.0858 & 0.1250 & 0.1910 \\
          & Level 2 (Solvable)     & 0.2214 & 0.2855 & 0.3435 \\
          & Level 3 (Non-Solvable) & 0.5976 & 0.6796 & 0.7458 \\
        \midrule
        ResNet-50
          & Level 1 (Abelian)      & 0.0341 & 0.0566 & 0.0874 \\
          & Level 2 (Solvable)     & 0.2229 & 0.2737 & 0.3133 \\
          & Level 3 (Non-Solvable) & 0.6484 & 0.7444 & 0.8386 \\
        \bottomrule
      \end{tabular}
    \end{small}
  \end{center}
  \vskip -0.1in
\end{table}

\begin{table}[ht]
  \caption{\textbf{Effect of Model Scale on Ratio Loss for Level 3 (Non-Solvable) Tasks.}
  Increasing model size yields only marginal improvements, demonstrating that the non-solvable complexity barrier is not resolved by additional capacity.}
  \label{tab:depth_ratio}
  \begin{center}
    \begin{small}
    \scshape
      \begin{tabular}{llccc}
        \toprule
        Architecture & Size & $N=5$ & $N=10$ & $N=20$ \\
        \midrule
        ViT
          & Base  & 0.7494 & 0.8218 & 0.8807 \\
          & Large & 0.7247 & 0.7992 & 0.8553 \\
          & Huge  & 0.7003 & 0.7882 & 0.8608 \\
        \midrule
        DINOv2
          & Base  & 0.5976 & 0.6796 & 0.7458 \\
          & Large & 0.5847 & 0.6708 & 0.7326 \\
          & Giant & 0.5474 & 0.6443 & 0.7079 \\
        \midrule
        ResNet
          & R-50  & 0.6484 & 0.7444 & 0.8386 \\
          & R-101 & 0.6379 & 0.7307 & 0.8065 \\
          & R-152 & 0.6548 & 0.7489 & 0.8269 \\
        \bottomrule
      \end{tabular}
    \end{small}
  \end{center}
  \vskip -0.1in
\end{table}

\section{Detailed Analysis of Atomic Linearity (OPA)}
\label{app:opa_details}
To rigorously investigate the intrinsic algebraic structure of the learned representations, we employ Orthogonal Procrustes Analysis (OPA) alongside standard Linear Probing. This section details the experimental protocol and presents the comprehensive quantitative results for atomic transformations ($N=1$).

\subsection{Experimental Protocol}
Our analysis pipeline calculates whether a fixed geometric transformation $g$ can be approximated by a linear operator in the frozen latent space of the DINOv2-base backbone. The procedure consists of three cohesive stages.

\textbf{1. Data Sampling.}
For a target object class (e.g., \textit{Bunny}), we generate pairs of observations $(I_{\text{src}}, I_{\text{tgt}})$ linked by a specific transformation $g$. For \textbf{Level 1 (Translation)}, we sample random positions $(x, y)$ on a $224 \times 224$ canvas. Given a translation parameter $(\Delta x, \Delta y)$, the target image is rendered at $(x+\Delta x, y+\Delta y)$. In contrast, for \textbf{Level 3 (3D Rotation)}, we sample a random initial pose $P_a$ represented by a random rotation matrix from $\mathrm{SO}(3)$. The target pose is computed as $P_b = P_a \cdot R_{g}^T$, where $R_{g}$ is the fixed relative rotation matrix for the operator $g$ (e.g., $30^\circ$ around Y-axis).

\textbf{2. Feature Extraction.}
We utilize the CLS token output from the frozen DINOv2 encoder as the representation vector $z \in \mathbb{R}^{768}$. For each transformation task, we generate a dataset of $N=3000$ pairs.

\textbf{3. Metric Definitions.}
We evaluate the structural fidelity using two complementary metrics, employing an 80/20 train-test split for learned parameters. First, to assess \textbf{General Linearity}, we fit a standard Linear Regression model ($z_{\text{tgt}} \approx W z_{\text{src}} + b$) and evaluate the coefficient of determination ($R^2$) on the test set. This metric determines if the transition allows for general affine distortions such as scaling or shearing. Second, to test for \textbf{Strict Isometry}—a requirement for structure-preserving group operations in a well-regularized space—we solve the Orthogonal Procrustes problem. We first center the data ($Z^c = Z - \bar{Z}$) and find the optimal orthogonal matrix $Q$:
\begin{equation}
    Q = \operatorname*{argmin}_{\Omega: \Omega^T \Omega = I} \| Z_{\text{tgt}}^c - Z_{\text{src}}^c \Omega \|_F
\end{equation}
The metric is reported as the standard $R^2$ score using consistent notation: 
\begin{equation}
    R^2_{\text{OPA}} = 1 - \frac{\sum \|Z_{\text{tgt}} - \hat{Z}_{\text{tgt}}\|^2}{\sum \|Z_{\text{tgt}} - \bar{Z}_{\text{tgt}}\|^2}
\end{equation}
where $\hat{Z}_{\text{tgt}} = Z_{\text{src}} Q + \bar{Z}_{\text{tgt}}$. Crucially, a negative score indicates that the optimal linear rotation fits the data worse than the static mean baseline, serving as strong evidence against the existence of a linear operator.

\subsection{Full Quantitative Results}

Table \ref{tab:full_opa_results} presents the complete evaluation results. We observe a stark contrast between Abelian transformations (Translation, In-plane Rotation) and Non-Solvable transformations (Out-of-plane Rotation).

\begin{table}[ht]
  \caption{\textbf{Atomic Linearity Analysis on DINOv2.} Comparison of general linear fit versus strict orthogonal fit. Note that while in-plane rotations (Rot Z) maintain positive OPA scores, out-of-plane rotations (Rot X, Y) consistently yield negative OPA $R^2$, confirming the absence of a valid linear operator.}
  \label{tab:full_opa_results}
  \begin{center}
    \begin{small}
      \begin{sc}
        \begin{tabular}{lccc}
          \toprule
          Transformation & Parameter & Linear Probe ($R^2$) & OPA ($R^2$) \\
          \midrule
          \multicolumn{4}{l}{\textit{Level 1: Abelian (Translation)}} \\
          \midrule
          Translation & X = 10\text{px} & 0.7850 & 0.5084 \\
                      & X = 30\text{px} & 0.7780 & 0.3432 \\
                      & X = 50\text{px} & 0.7973 & 0.5021 \\
                      & XY = 30\text{px} & 0.7074 & -0.0601 \\
          \midrule
          \multicolumn{4}{l}{\textit{Level 3: 3D Rotation (Subgroup Analysis)}} \\
          \midrule
          Rot Z (In-Plane)  & $30^\circ$ & 0.4495 & 0.3054 \\
                            & $60^\circ$ & 0.4274 & 0.1708 \\ 
                            & $90^\circ$ & 0.4480 & 0.1963 \\
          \midrule
          Rot Y (Out-of-Plane) & $30^\circ$ & 0.3163 & -0.0286 \\
                               & $60^\circ$ & 0.2782 & -0.2331 \\
                               & $90^\circ$ & 0.2698 & -0.2526 \\
          \midrule
          Rot X (Out-of-Plane) & $30^\circ$ & 0.3581 & 0.0168 \\
                               & $60^\circ$ & 0.3259 & -0.1263 \\
                               & $90^\circ$ & 0.3475 & -0.0395 \\
          \bottomrule
        \end{tabular}
      \end{sc}
    \end{small}
  \end{center}
  \vskip -0.1in
\end{table}


\end{document}